\begin{document}
\title{Cascade Detector Analysis and Application to Biomedical Microscopy}
%
\author{Thomas L. Athey \and
Shashata Sawmya \and
Nir Shavit}
\authorrunning{T. Athey et al.}
%
\institute{MIT CSAIL, Cambridge, MA, USA\\
\email{\{tathey\_1,shashata,shanir\}@mit.edu}}



\maketitle              
\begin{abstract}
As both computer vision models and biomedical datasets grow in size, there is an increasing need for efficient inference algorithms. We utilize cascade detectors to efficiently identify sparse objects in multiresolution images. Given an object's prevalence and a set of detectors at different resolutions with known accuracies, we derive the accuracy, and expected number of classifier calls by a cascade detector. These results generalize across number of dimensions and number of cascade levels. Finally, we compare one- and two-level detectors in fluorescent cell detection, organelle segmentation, and tissue segmentation across various microscopy modalities. We show that the multi-level detector achieves comparable performance in 30-75\% less time. Our work is compatible with a variety of computer vision models and data domains.

\keywords{Segmentation  \and Cascade \and Microscopy}

\end{abstract}

\section{Introduction}

Image data has driven significant discoveries in many disciplines including biology, earth sciences, and medicine. The dimensionality of images is incredible – the number of pixels in an image of a complete mouse brain can be of the same magnitude as the number of stars in the Milky Way. As image acquisition accelerates in research, with some datasets at the petabyte scale \cite{shapson-coe_petavoxel_2024}, there is an increasing need to design algorithms that can process images efficiently.

This work is motivated by the observation of two themes in biomedical image analysis. The first is object sparsity, such as sparse neuron labeling in brain mapping \cite{winnubst_reconstruction_2019,lin_cell-type-specific_2018}, cell sparsity in liquid based cytology \cite{hussain_liquid_2020} and hemorrhage sparsity in CT scans \cite{piao_intracerebral_2023}. The second theme is that large images are often stored at multiple resolutions \cite{hider_brain_2022,kenney_brain_2024,burns_open_2013,wang_managing_2012}. Thus, cascade detectors, which use low-resolution information to quickly rule out background regions, are a natural choice for biomedical image analysis (Fig. \ref{fig:diagram}). In this work, we show how a cascade detector model can offer comparable accuracy in less time than a detector that operates only on the highest resolution. While previous work has focused on two-dimensional images and a fixed number of cascade levels, we extend these results across domain dimensionality and the cascade level number. We demonstrate this speedup in cell body (soma) detection in fluorescence imaging, organelle segmentation in electron microscopy (EM), and tissue segmentation in digital pathology. This is, to our knowledge, the most broad application of cascade detectors to biomedical microscopy. This work could be combined with other efficient inference methods such as network sparsification and quantization to further accelerate image analysis.

\begin{figure}
\centering
\includegraphics[width=\textwidth]{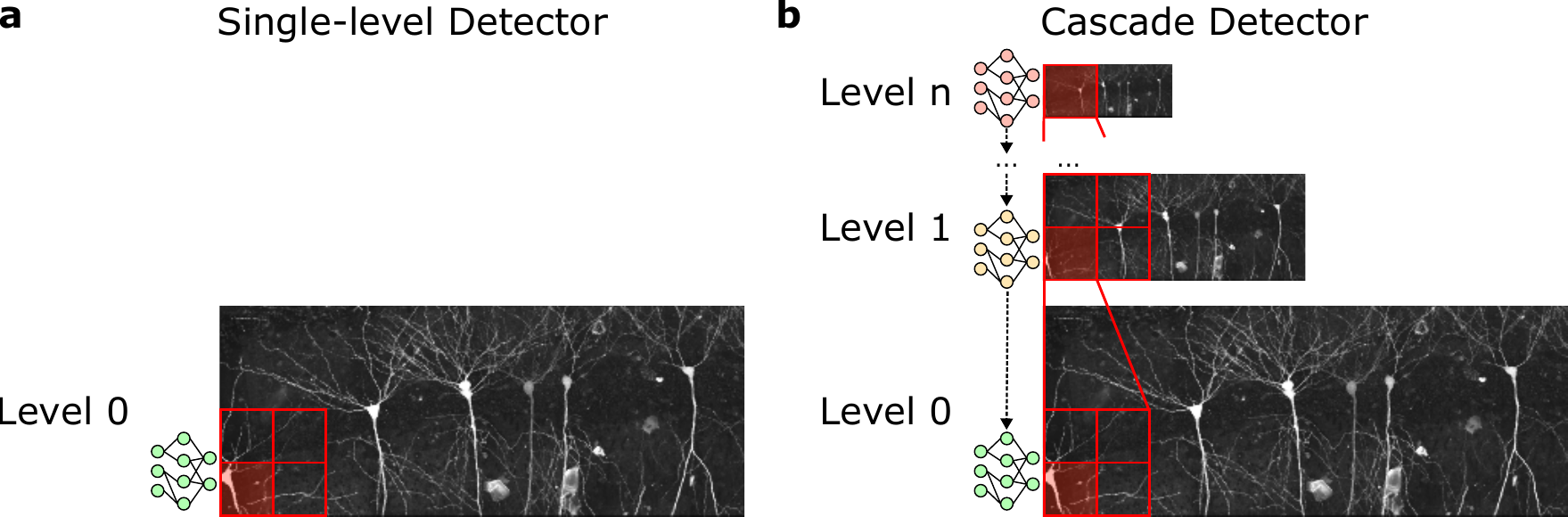}
\caption{Cascade detectors use different levels of a multiresolution image pyramid during inference. \textbf{a)} Schematic of a single-level detector in microscopy where a large image is broken into chunks due to memory limitations. \textbf{b)} In a cascade detector, low resolution data is processed first to rapidly rule out background regions. Lower level detectors are only called on positive candidates. Pictured is a subset of the fluorescence microscopy dataset from Bloss et al. \cite{bloss_structured_2016}.} \label{fig:diagram}
\end{figure}

\section{Related Works}

The subject of this work is performing some computer vision task such as image segmentation or object detection on a large, multiresolution image. Usually the imaging system acquired the highest resolution data, and lower resolutions were computed via some downsampling operation \cite{hider_brain_2022}. In order to avoid possible information loss, computer vision algorithms such as neural networks are often applied to the highest resolution. Additionally, the high resolution data is often broken into chunks due to memory limitations. The goal of our work is how to navigate through the chunks of data to efficiently accomplish a computer vision task. In this way, our work can be connected to active sensing \cite{varotto_active_2021}.

Our approach is inspired more directly by the work of Viola and Jones \cite{viola_robust_2004} which introduces the concept of a classifier cascade and applies them to face detection. Subsequent theoretical work described how to train these systems optimally and how to estimate generalization performance \cite{brubaker_towards_2006}. Neural networks were then used in a cascade framework, again to detect faces \cite{li_convolutional_2015}. The term ``cascade'' has also been used in other contexts such as the application of attention to multi-scale features \cite{rahman_medical_2023}, or the splitting of feature maps across attention heads \cite{liu_multi-branch_2024}.

\section{Theoretical Motivation}

We analyze a setup that is common in contemporary neuroscience research, a three dimensional image provided at multiple resolutions, with a relative downsampling factor of two. Specifically, we consider two resolutions where the high resolution is called level 0 (L0) and the low resolution is called level 1 (L1). For simplicity, we assume that the images at both levels are decomposed into a whole number of uniformly sized chunks. Say the number of L0 chunks is $n$ Further, we assume that the L0 image has exactly twice as many pixels along every axis and thus there are $\frac{n}{8}$ chunks in the L1 image.  The ultimate goal is to determine which L0 chunks contain some object of interest. 

We consider two detector systems. The first is a traditional, single-level detector which classifies all $n$ L0 chunks  (Fig. \ref{fig:diagram}a). The second is a cascade detector which is composed of both a L1 and L0 classifier. The cascade detector classifies all $\frac{n}{8}$  L1 chunks. If a L1 chunk is predicted to be positive, the associated L0 chunks are passed to the L0 detector (Fig. \ref{fig:diagram}b). Thus, a positive prediction at both levels is necessary for a L0 chunk to be predicted as positive.

Say that the true labels of the L0 chunks are independent Bernoulli random variables with probability $p$. Further, say the predictions by the different classifiers are independent of each other when conditioned the true chunk labels. The true positive rate (TPR) and false positive rate (FPR) of the classifier at level $k$ will be denoted $\beta_k$ and $\alpha_k$ respectively. The single-level detector has TPR and FPR $\beta_0$ and $\alpha_0$ by definition. Further,

\begin{proposition}
\label{prop:sens}
Under the setting described above, the two level cascade detector has true positive rate $\beta_{1,0}$ and false positive rate $\alpha_{1,0}$, where

\begin{equation}
    \beta_{1,0}= \beta_1 \beta_0
\end{equation}

\begin{equation}
    \alpha_{1,0}= (1-p)^7(\alpha_1\alpha_0) + (1-(1-p)^7)(\beta_1\alpha_0) 
\end{equation}

Additionally, if $K_n$ is the number of calls to the L0 classifier, then
    \begin{equation}
        E[K_n]= \left(\beta_1+(1-p)^8(\alpha_1-\beta_1)\right)n
    \end{equation}
    \label{prop:acc}
\end{proposition}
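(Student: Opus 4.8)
The plan is to treat each L0 chunk's cascade prediction as the conjunction of two conditionally independent events---a positive prediction on its parent L1 chunk and a positive prediction on the L0 chunk itself---and to compute each quantity by conditioning on the relevant true labels and invoking the law of total probability. The structural fact I would establish first is that an L1 chunk is truly positive exactly when at least one of its eight constituent L0 chunks is positive, so its true label is a deterministic function of its children's labels. I would fix notation: for a given L0 chunk let $T$ be its true label, let $P_0$ be the event that the L0 classifier predicts it positive, and let $P_1$ be the event that the L1 classifier predicts its parent positive; by hypothesis $P_0$ and $P_1$ are independent given all true labels.

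For the true positive rate I would condition on $T=1$. A positive L0 chunk forces its parent to be truly positive, so the L1 classifier fires with probability $\beta_1$ and the L0 classifier with probability $\beta_0$; conditional independence then gives $\beta_{1,0}=\beta_1\beta_0$. For the false positive rate I would condition on $T=0$ and case-split on the parent's true label, which now depends only on the \emph{other seven} siblings. With probability $(1-p)^7$ all seven are negative, making the parent truly negative so the L1 classifier fires with probability $\alpha_1$; with the complementary probability $1-(1-p)^7$ the parent is truly positive and the L1 classifier fires with probability $\beta_1$. Multiplying the resulting probability of $P_1$ by the L0 false positive rate $\alpha_0$ yields the stated expression for $\alpha_{1,0}$.

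For the expected number of L0 calls, the key observation is that the L0 classifier is invoked on all eight children of an L1 chunk precisely when that L1 chunk is flagged positive, so $K_n = 8M$ where $M$ counts the positively predicted L1 chunks. By linearity of expectation over the $n/8$ L1 chunks, $E[K_n] = n\cdot\Pr(\text{a given L1 chunk is predicted positive})$. I would compute this single-chunk probability by conditioning on the L1 chunk's true label---now determined by all \emph{eight} children, giving truly-negative probability $(1-p)^8$---which produces $(1-(1-p)^8)\beta_1 + (1-p)^8\alpha_1 = \beta_1 + (1-p)^8(\alpha_1-\beta_1)$.

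The main obstacle, and the place I would be most careful, is the bookkeeping that distinguishes the exponent $7$ in $\alpha_{1,0}$ from the exponent $8$ in $E[K_n]$: the former arises because conditioning on the target L0 chunk being negative removes it from the sibling count, whereas the latter ranges freely over all eight children. Everything else reduces to a routine application of conditional independence and total probability.
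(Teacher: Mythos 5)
Your proposal is correct and follows essentially the same route as the paper: conditioning on the target chunk's true label and the seven siblings' labels for the accuracy formulas, and using total expectation (equivalently, linearity over L1 chunks) on whether each L1 chunk contains any positive child for the call count. Your explicit statement that an L1 chunk is truly positive iff at least one of its eight children is positive, and your care with the $7$-versus-$8$ exponent, merely make precise what the paper's terser proof leaves implicit.
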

%
%
\begin{proof}
The first two equations follow from Bayes rule. A true positive occurs when both the L1 and L0 chunks are correctly predicted to be positive which happens with probability $\beta_1\beta_0$.

The term $(1-p)^7$ is the probability that all seven other L0 chunks contained within the L1 chunk are negative. In this case, $\alpha_1\alpha_0$ is the probability that both detectors give false positives. Under the complement event, $\beta_1\alpha_0$ represents the probability that the L1 prediction is a true positive and the L0 prediction is a false positive.

For the third equation, we assumed that the volume can be split into a whole number of chunks, so we only need to consider the case of a single L1 chunk, then scale the result accordingly. We use the law of total expectation over the intermediate event of whether at least one of the L0 chunks is positive:

    \begin{equation}
        E[K_8]=(1-p)^8(8  \alpha_1) + (1-(1-p)^8)(8  \beta_1) 
    \end{equation}

    Finally, we divide by 8 to normalize for the number of L0 subvolumes.
\end{proof}

Figure \ref{fig:sensitivity} shows the dependence of these equations on various parameters. 

\begin{figure}
\includegraphics[width=\textwidth]{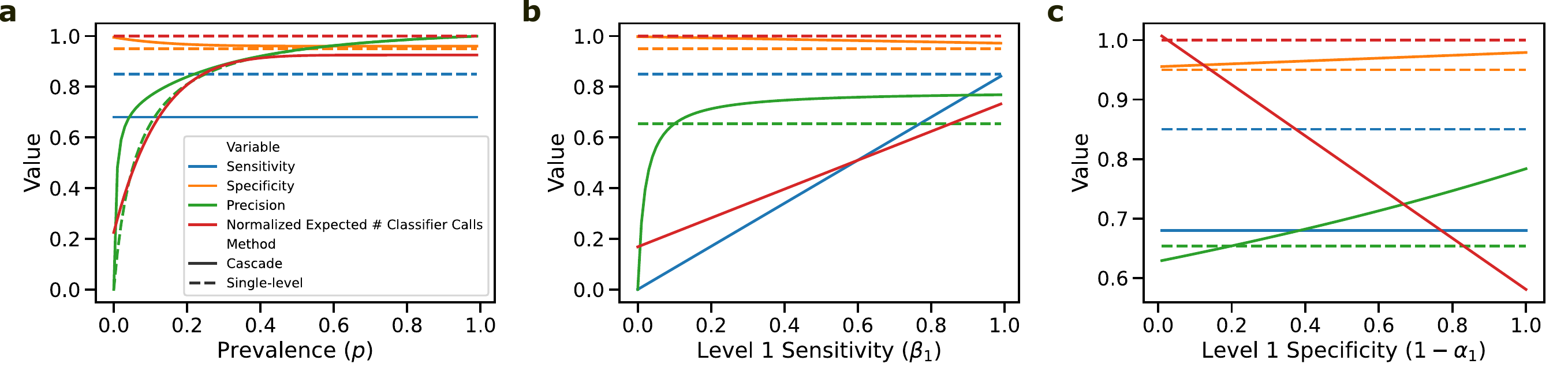}
\caption{The performance of the cascade detector depends on the accuracies of the detectors at both levels, and the prevalence of the object of interest. We performed a sensitivity analysis of these parameters by setting them to a set of values ($\beta_0=0.85$, $\beta_1=0.8$, $\alpha_0=0.05$, $\alpha_1=0.1$, $0=0.1$), then individually varying $p$ (\textbf{b}), $\beta_1$ (\textbf{c}) and $1-\alpha_1$ (\textbf{d}). The detectors' sensitivity, specificity, precision, and expected number of calls to any classifier (normalized by the number of L0 chunks) are plotted.} \label{fig:sensitivity}
\end{figure}

It is trivial to extend these results in the following ways:

\begin{remark}
    Proposition \ref{prop:acc} can be extended to signals over $d$ dimensions by replacing the exponents with $2^d-1$ and $2^d$ respectively.
    \label{remark:dim}
\end{remark}

\begin{remark}
The formulas for sensitivity and specificity can be extended to cascade detectors with more than two levels by using Proposition \ref{prop:acc} recursively. 
\end{remark}

\section{Experiments}
We applied single-level detectors, and a two-level cascade detectors to a variety of publicly available datasets. In each case, the L0 detector in the cascade method is identical to the single-level detector.

\subsection{Datasets}

\subsubsection{Soma detection in fluorescence images}
The ``CA1 somas'' dataset is a three-dimensional fluorescence image of pyramidal neurons in mouse hippocampus \cite{bloss_structured_2016}. Seven ground truth cell bodies were manually annotated in the test set.

The ``fMOST somas'' dataset is a subset of an fMOST image of a whole mouse brain \cite{zeng_whole_2024}. The train and train sets were from different hemispheres. Eighteen ground truth cell bodies were manually annotated in the test set.

\subsubsection{Organelle segmentation in electron microscopy}

In this setting, the task is two-dimensional semantic segmentation
 of organelles in an image stack.

The ``C. elegans mitochondria'' dataset involved sagittal sections of the adult C. elegans head, imaged with EM and segmented for mitochondria \cite{witvliet_connectomes_2021}. Slices 256-319 were used for training, slices 320-335 were used for validation, and slices 0-199 were used for testing.

The ``HeLa nucleus'' dataset is an EM volume, along with nuclei segmentations, from the hela-3 sample in the OpenOrganelle project \cite{heinrich_whole-cell_2021}. Slices 0-49 were used for training, slices 50-59 were used for validation, and slices 60-124 were used for testing.

\subsubsection{Tissue segmentation in digital pathology}

In this setting, the task was tissue segmentation in pathology whole slide images from the CAMELYON dataset \cite{alexi_supporting_2018}. Specifically, we used test images 1-10. Further dataset details are in Table \ref{tab:data}.

\begin{table}
\centering
\caption{Dataset details, including resolution and size of the train/test sets (in pixels).}\label{tab:data}
\begin{tabular}{|p{0.12\linewidth} | p{0.18\linewidth}|p{0.16\linewidth}|p{0.16\linewidth}|p{0.16\linewidth}|p{0.16\linewidth}|}
\hline
\textbf{Dataset} &  \textbf{Access Portal} & \textbf{L0 res. (nm$^3$)}& \textbf{L1 res. (nm$^3$)} & \textbf{L0 Train Set Size (px.)}& \textbf{L0 Test Set Size (px.)} \\
\hline
CA1 somas &  BossDB \cite{hider_brain_2022} & $800\times 800 \times 800$ & $1.6e3\times 1.6e3 \times 1.6e3$ & $720 \times 938 \times 179$ & $718 \times 938 \times 179$ \\
\hline
fMOST somas &  BIL \cite{kenney_brain_2024} & $3.5e3 \times 3.5e3 \times 1e5$ & $3.5e3 \times 3.5e3 \times 1e3$ & $1410\times 1900\times 1000$ &  $1410\times 1900\times 1000$\\
\hline
C. elegans mito. &  BossDB \cite{hider_brain_2022} & $16 \times 16 \times 30$& $32 \times 32 \times 30$ & $4992 \times 2752 \times 80$ & $4992 \times 2752 \times 200$ \\
\hline
HeLa nucleus &  OpenOrganelle \cite{heinrich_whole-cell_2021} & $32 \times 32 \times 25.9$ & $64 \times 64 \times 51.8$ & $1500 \times 1550 \times 60$ & $1500 \times 1550 \times 64$  \\
\hline
H\&E tissue &  CAMELYON \cite{alexi_supporting_2018} & $1e3 \times 1e3$ & $2e3 \times 2e3$ & N/A & 10 images  \\
\hline
\end{tabular}
\end{table}

\subsection{Implementation}

For soma detection in fluorescence imaging, the detectors were ilastik pixel classification modules (random forests) \cite{berg_ilastik_2019}. In EM organelle segmentation, the detectors were U-nets with pretrained ResNet backbones \cite{ronneberger_u-net_2015}. For tissue segmentation, we used the ``HEST'' pretrained model via the TRIDENT package \cite{jaume_hest-1k_2025,zhang_accelerating_2025}. 

Connected components below a certain area threshold (in pixels) were filtered out after segmentation. Segmentations were converted to object detections by identifying connected components. Neural networks were executed on a GPU, and ilastik experiments were executed on a CPU. A summary of the implementation details is given in Table \ref{tab:methods}.

\begin{table}
\centering
\caption{Implementation details of experiments.}\label{tab:methods}
\begin{tabular}{|p{0.19\linewidth} | p{0.12\linewidth}|p{0.3\linewidth}|p{0.35\linewidth}|}
\hline
\textbf{Dataset} &  \textbf{Detector} & \textbf{L1:L0 Area Threshold (px.)} & \textbf{Hardware (memory)} \\
\hline
CA1 somas &  ilastik & 1000:50 & Apple M2 Pro (16GB) \\
\hline
fMOST somas &  ilastik & 20:5 & Apple M2 Pro (16GB) \\
\hline
C. elegans mito. &  U-Net & 36:10 & GeForce RTX 2080 Ti (12GB) \\
\hline
HeLa nucleus &  U-Net & 1000:250 & GeForce RTX 2080 Ti (12GB)  \\
\hline
H\&E tissue &  HEST & 0:0 & GeForce RTX 2080 Ti (12GB)  \\
\hline
\end{tabular}
\end{table}

\section{Results}

The results are organized in Table \ref{tab1}. On the CA1 somas dataset, both detectors detected six out of seven cell bodies, and had no false positives (Fig. \ref{fig:results}a). The cascade detector ran more than twice as fast as the single-level detector.

On the fMOST somas dataset, the cascade detector ran almost four times as fast, but had one more false positive and one more false negative than the single-level detector. Some of the detections are shown in Figure \ref{fig:results}b.

The detectors had identical accuracy on the C. elegans mitochondria dataset. Mitochondria was densely present in this dataset, so the primary advantage of the cascade detector was as a tissue detector that avoided calling the L0 classifier on blank margins (Fig. \ref{fig:results}c).

The cascade detector was over twice as fast on the HeLa nucleus dataset, with a reduction of only 0.01 in recall. Finally, the cascade detector was 40\% faster on the H\&E tissue dataset, and disagreed with the single-level detector on less than 1\% of pixels.

\begin{figure}
\includegraphics[width=\textwidth]{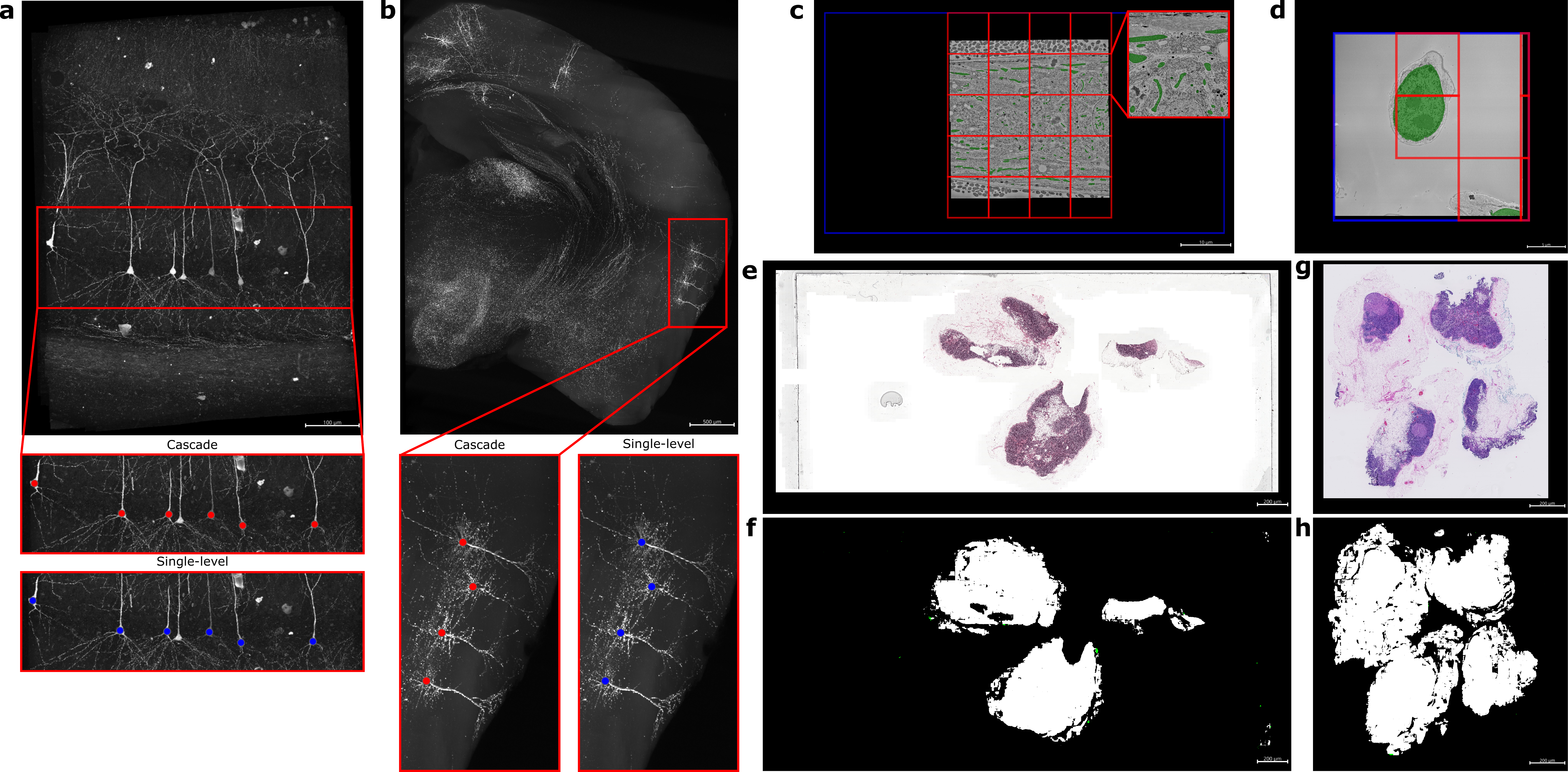}
\caption{A selection of results on the various datasets. \textbf{a} The test image from the CA1 somas dataset included seven fluorescent neurons \cite{bloss_structured_2016} with insets showing the detections from the cascade detector (red points) and single-level detector (blue points). \textbf{b} The test image from the fMOST somas dataset includes fluorescent neurons throughout mouse cortex \cite{zeng_whole_2024}. Insets show four neurons that were correctly detected by both the cascade detector (red points) and single-level detector (blue points). \textbf{c} A slice of the test image from the C. elegans mitochondria dataset are shown, with the ground truth segmentation overlaid in green, and the outer limits of the volume shown in blue \cite{witvliet_connectomes_2021}. Red grid lines depict the L1 subvolumes that were passed to the L0 detector. \textbf{d} A slice of the HeLa nucleus test set is shown, with the same overlays as in \textbf{c}. \textbf{e-h} Two test images from the H\&E tissue dataset are shown (\textbf{e,g}), with the detector segmentations (\textbf{f,h}). The tissue segmentation by the single-level detector (green) and cascade detector (magenta) are summed in color space where they overlap (white).} \label{fig:results}
\end{figure}



\begin{table}
\centering
\caption{Comparison of single-level and cascade detector on five datasets. In each case, results are reported from the held-out test set. For organelle semantic segmentation, recall and precision are computed pixelwise. For the H\&E dataset, accuracy is computed with respect to the single-level detector output. ``L1:L0 Calls'' denotes the number of calls to the level 1 or level 0 detector.}\label{tab1}
\begin{tabular}{|l|l|l|l|l|l|}
\hline
\textbf{Dataset} & \textbf{Detector} &  \textbf{Recall} & \textbf{Precision} & \textbf{L1:L0 Calls} & \textbf{Runtime (s)} \\
\hline
CA1 somas & Single-level &  0.86 & 1.0 & 0:96 & 837 \\
\hline
& Cascade  &  0.86 & 1.0 & 12:40 & 386 \\
\hline 
fMOST somas & Single-level &  0.61 & 0.85 & 0:1200 & 13484 \\
\hline
& Cascade  &  0.56 & 0.83 & 400:104 & 3396 \\
\hline 
C. Elegans mito. & Single-level &  0.81 & 0.55 & 0:44000 & 601 \\
\hline
& Cascade  &  0.81 & 0.55 & 12000:18024 & 382 \\
\hline
HeLa nucleus & Single-level &  0.91 & 0.95 & 0:2688 & 1263 \\
\hline
& Cascade  &  0.90 & 0.95 & 384:1088 & 569 \\
\hline
H\&E tissue & Single-level &  N/A & N/A & 0:38452 & 1749 \\
\hline
& Cascade  & >0.99 & 1.0  & 9681:11792 & 1043 \\
\hline
\end{tabular}
\end{table}

\section{Discussion}

In this work we show that cascade detectors, which have been popular in face detection applications, can be successfully applied to microscopy data for tasks such as fluorescent soma detection, organelle segmentation, and tissue segmentation. Under a conditional independence assumption, we show that the number of inference calls can be dramatically reduced in the cascade setting if the object of interest is sparse. These results extend to signals on a domain of arbitrary dimension, and our experiments include both two-dimensional and three-dimensional images, and both random forest and neural network detectors.

The main limitation in our theoretical work is our assumption that the performance of detectors at different levels are independent, conditioned whether an object exists in that region. In practice, detectors at different levels might be correlated. For example, inherent difficulty of different samples might lead to correlation between the detectors. A model proposed by Eckhardt and Lee \cite{eckhardt_theoretical_1985}, and summarized by Hansen and Salamon \cite{hansen_neural_1990} incorporates sample-dependent error rates. Our accuracy result in Proposition \ref{prop:acc} would change by allowing all variables to depend on the sample $x$ (e.g. $\beta_1$ becomes $\beta_1(x)$), then integrating over the probability distribution of $x$. We note that the number of classifier calls from Proposition \ref{prop:acc} does not change, since it involves only the L1 detector.

Another potential concern is that constructing a cascade detector involves training detectors at different resolutions. It is theoretically possible to use the same L0 training data to train higher level detectors by downsampling the annotations, or to use a single detector that supports multiple resolutions \cite{zhang_real-time_2007,ren_faster_2015}. However, these approaches would likely lead to statistical dependence mentioned earlier. In practice, the decision on if and how to train detectors at various resolutions depends on the number of training samples needed, the cost of obtaining annotations, and the extent to which statistical dependence affects accuracy.

There are several potential future directions stemming from this work. The first is comparing predicted performance from Proposition \ref{prop:acc} to actual performance. The detectors could be applied to a validation dataset to estimate the sensitivities and specificities used in the given formulas. Further, one could estimate statistical dependence between detectors in practice by computing correlations between their predictions.

The cascade method showed a larger speedup on three-dimensional images. This agrees with Remark \ref{remark:dim} where, in the limit of increasing sparsity, the expected number of classifier calls decreases with domain dimension. We are also interested to see if this trend extends to other dimensions such as time-series data (one-dimensional), or data that varies in both space and time. Setting up a cascade detector with more than two levels is also a potential area of future work.

This work is orthogonal to other areas of research into efficient inference such as sparsification and quantization, which focus on the arithmetical implementation of neural networks. Those methods could be combined with this work to further accelerate image analysis. We believe this work has broad utility in making image analysis in microscopy data more efficient since it is agnostic to the type of detector used, the domain dimensionality, and the object of interest.

%
%

\newpage
\bibliographystyle{splncs04}
\bibliography{references}

\end{document}